\title{KLUCB Approach to Copeland Bandits}
\author{Nischal Agrawal, Prasanna Chaporkar}
\date{\today}
\newtheorem{theorem}{Theorem}
\newtheorem{lemma}{Lemma}[theorem]
\begin{document}

\maketitle
\begin{abstract}
Multi-armed bandit(MAB) problem is a reinforcement learning framework where an agent tries to maximise her profit by proper selection of actions through absolute feedback for each action. The dueling bandits problem is a variation of MAB problem in which an agent chooses a pair of actions and receives relative feedback for the chosen action pair. The dueling bandits problem is well suited for modelling a setting in which it is not possible to provide quantitative feedback for each action, but qualitative feedback for each action is preferred as in the case of human feedback. The dueling bandits have been successfully applied in applications such as online rank elicitation, information retrieval, search engine improvement and clinical online recommendation. We propose a new method called Sup-KLUCB for K-armed dueling bandit problem specifically Copeland bandit problem by converting it into standard MAB problem. Instead of using MAB algorithm independently for each action in a pair as in Sparring and in Self-Sparring algorithms, we combine a pair of action and use it as one action. Previous UCB algorithms such as Relative Upper Confidence Bound(RUCB) can be applied only in case of Condorcet dueling bandits, whereas this algorithm applies to general \textit{Copeland dueling bandits}, including \textit{Condorcet dueling bandits} as a special case. Our empirical results outperform state of the art Double Thompson Sampling(DTS) in case of Copeland dueling bandits. 
\end{abstract}

\section{Introduction}
A classic Multi Armed Bandit (MAB) problem is a reinforcement learning problem wherein an agent learns to play actions in order to maximise her profit. Initially agent is uninformed about any stochastic information about the actions. She learns to play through some feedback associated with previous actions. MAB problems possess dilemma of \textit{exploration} and \textit{exploitation}. Since true parameters are unknown, any algorithm can maintain only estimated parameters. Inadequate exploration might result in playing of sub-optimal actions which increase loss and excess exploration will lead to slow convergence which is also undesirable. For bandit problems, performance of any algorithm can be measured using \textit{cumulative regret}. In MAB problems regret for any action played at a time instant is defined as the gap between expected reward of best action and expected reward of current action. Thus, each algorithm will try to minimise cumulative regret. MAB problems have very wide range of application. They have been successfully applied in fields of online advertisements, clinical trials, adaptive routing and communication systems. 
\\
The dueling bandit problem \citep{Yue2012} is a variation of MAB problem in which an agent chooses a pair of actions and receives relative feedback about preference of action in chosen pair. Unlike MAB problem in which agent receives quantitative feedback for her actions, only qualitative feedback is received in dueling bandit problem. Study of these kinds of problems is important when dealing with feedback which are naturally relative (e.g. feedback given by humans) or it is inefficient to provide absolute feedback. The dueling bandits have been successfully applied in applications such as information retrieval \citep{yue2009interactively}, search engine improvement, clinical online recommendation \citep{sui2014clinical} and online rank elicitation \citep{szorenyi2015online}. 
\\
In a k-armed dueling bandit problem, when an action defeats rest of the actions, it is called \textit{Condorcet winner}. But this might not always be the case e.g. the best football team might not defeat rest of the teams. In absence of Condorcet winner, there can be several other criteria for judging the winner. \textit{Copeland winner} \citep{ZoghiCopeland} is the action which defeats maximum number of other actions. \textit{Borda Winner} \citep{Jamieson2015} is the winner with largest \textit{Borda score} defined by $\frac{1}{K}\sum_j^K P_{ij}$, where $P_{ij}$ is probability of action $i$ defeating action $j$. We will assume unique winner throughout our discussion. We will refer to an action as an arm hereafter in our discussion.

Our paper is organised as follows. In following sub-section we discuss related works. In section 2, we formally define MAB problem and k-armed bandit problem. In section 3, we provide detailed description of Sup-KLUCB algorithm and key intuitions behind it. 
Section 4, we present our results and comparison with various algorithms. We conclude the report in section 5.

\subsection{Related Work}
Standard MAB problem have been studied quite extensively in past. Most notable work has been done by Lai and Robbins \citep{lai1985} where they developed asymptotic lower bounds for regret to be of order $\mathcal{O}(\log(n))$. Algorithms which follow above rules are called \textit{uniformly good} and are asymptotically efficient. Graves and Lai \citep{graves1997asymptotically} proved the bounds by applying bandits in a controlled Markov chain setting. Various algorithms with varying success have been put forth to solve MAB problem with the most important ones as Upper Confidence Bound (UCB) \citep{auer2002finite} and Kullback-Leibler UCB (KL-UCB) \citep{garivier2011kl}. \\
KL-UCB algorithm is an online, horizon free index policy for stochastic bandit problems. Horizon is the number of steps told in advance to any algorithm before which it has to produce single best arm and henceforth continue to exploit that arm. Horizon free algorithms do not require any specified horizon and evaluation process continuous indefinitely. Thus horizon free algorithm have to minimise regret across all horizons by rapidly converging to selection of optimal arm. Authors show that regret of KL-UCB algorithm is upper bounded by 
\begin{equation}
    \underset{n \rightarrow \infty}{\lim \sup} \frac{ \mathds{E}[R_n]}{ \log(n)} \leq \sum\limits_{a:\mu_a < \mu_{a^*}} \frac{\mu_a - \mu_a^*}{\mathcal{K}(\mu_a , \mu_{a^*})}
\end{equation}
where $\mathcal{K}(p,q) = p\log(\tfrac{p}{q}) + (1-p)\log(\tfrac{1-p}{1-q})$ denotes the Kullback-Leibler divergence between Bernoulli distributions of parameter $p$ and $q$ respectively. $\mu_a \in [0,1]$ is expected reward for action $a$ and ${a^*}$ is the action with highest expected reward. Authors also show a non-asymptotic upper bound on number of draws of sub optimal arm $a$: for all $\epsilon > 0 $ there exists $C_1, C_2(\epsilon)$ and $\beta(\epsilon)$ such that
\begin{equation}
    \mathds{E}[N_n(a)] \leq \frac{\log(n)}{\mathcal{K}(\mu_a , \mu_a^*)}(1+\epsilon) + C_1 \log(\log(n)) + \frac{C_2(\epsilon)}{n^{\beta(\epsilon)}}.
\end{equation}
\\
Several algorithms have been proposed for k-armed dueling bandit problem. We can briefly categorise them into 2 categories viz. Asymmetric and Symmetric \citep{sui2018advancements}. Asymmetric type of algorithms consider both arms independently. Usually it selects first arm which has best performance (exploitation arm) and then it selects second arm to duel against the first arm with aim to identify an arm which can outperform the first (exploration arm). Interleaved Filter (IF), Beat the Mean (BtM), SAVAGE, Doubler, Relative Upper Confidence Bound (RUCB) \citep{Zoghi2014}, MergeRUCB and Double Thompson Sampling (DTS) \citep{wu2016double} are few examples of asymmetric type of algorithms. Symmetric type of algorithms treat the choice of two arms symmetrically. Sparring and Self-Sparring algorithms are few examples of symmetric algorithms. \\
RUCB algorithm \citep{Zoghi2014} extends UCB to dueling bandit problems by using a upper confidence bound on preference probabilities. Cumulative regret of RUCB after $T$ time steps is bounded by $\mathcal{O}(K\log(T))$. Cumulative regret after $t$ iterations (for some $\alpha > 1$), is bounded by
\begin{equation}
    \mathds{E}[R_t] \leq \Delta^*\Big(\frac{(4\alpha -1)K^2}{2\alpha -1 }\Big)^{\tfrac{1}{2\alpha -1}} \frac{2\alpha -1}{2\alpha -2} + \sum\limits_{i>j} 2\alpha \frac{\Delta_i + \Delta_j}{\min\{\Delta_i^2, \Delta_j^2\}}\ln(t),
\end{equation}
where $\Delta^* := \max_i \Delta_i$ and $\Delta_{i} = P_{ai} - 0.5$ where $a$ is the best arm. Double Thompson Sampling (DTS) \citep{wu2016double} as the name suggests uses Thompson Sampling twice, once it is used to break ties while selecting first arm in RUCB and then it used to sample second arm. Cumulative regret of DTS is bound by $\mathcal{O}(K\log(T) + K^2 \log(\log(T))$. DTS is state of art algorithm for small scale dueling bandits whereas MergeRUCB (variant of RUCB) is state of art algorithm for large scale bandits. However the scope of RUCB type of algorithms are limited to Condorcet type problems whereas DTS extends to Copeland case as well.\\
Our algorithm reduces dueling bandit problem to standard MAB problem. Previously also, algorithms which converts dueling bandit problem to conventional MAB problem have been proposed. Doubler \citep{ailon2014reducing} is first approach in this direction. Doubler assumes that probability of an arm winning over another is the linear function of the underlying utility of each arm. This utility association assumption requires total ordering of arms. Sparring \citep{ailon2014reducing} algorithm also assumes total ordering as in Doubler. Sparring algorithm uses separate MAB algorithms to choose different arms, reducing dueling bandit problem into two MAB problems which can be related to \textit{adversarial bandit} problem. Self-Sparring \citep{sui2017multi} performs better than Sparring algorithm and it is upper bounded by $\mathcal{O}(K\log(T))$. Self-Sparring independently chooses arm by calling stochastic MAB algorithms like Thompson sampling as a subroutine. Self-Sparring can used $m$ independent MAB to duel $m$ arms simultaneously and dueling bandit problem is a special case with $m=2$.

\subsection{Our Contribution}
We propose an algorithm called Sup-KLUCB to solve k-armed dueling bandit problem specially Copeland bandit problem. Sup-KLUCB is horizon free, stochastic reinforcement learning algorithm like KLUCB. Unique feature of Sup-KLUCB algorithm is its flexibility, with minor changes in objective function, it can be used to solve various type of dueling bandit problems which has single unique winner such as Copeland, Condorcet(special case of Copeland problem), Borda etc. In this paper we focus on Copeland problems. We finally present Monte Carlo simulations demonstrating superior performance of our algorithm compared to existing methods.


\section{Problem Setting}
First, we discuss standard k-armed MAB problem and then move to k-armed dueling bandit problem.
\subsection{Multi-Armed Bandit model}
We consider a stochastic multi-armed bandit problem with $K$ arms such that $K$ is finite and $K \geq 2$. We define $\mathcal{A} = \{1,2,...,K\}$ as set of arms. Time proceeds in round indexed by $n = 1,2,...,T$. In each round, reward $X_a(n)$ is received for playing arm $\pi_n = a \in \mathcal{A}$. These rewards are bounded in $\Theta = [0,1]$. Sequences $\big(X_a(n)\big)_{n \geq 1}$ for all arms $a$ are i.i.d with common expectation $\mu_a$.  Rewards across arms are also assumed to be independent. We denote $N_a(n)$ as number of times arm $a$ was played till round $n$, i.e. $N_a(n) = \sum_{t=1}^n \mathds{1}_{\pi_t = a}$.  At each round, a decision rule or algorithm plays an arm depending on past decisions and rewards observed. The set $\Pi$ of all possible decision rules consists of policies $\pi$ such that event $\{(\pi_n = a)\}$ (play arm $a$ at round $n$) belongs to $\sigma$ field $\mathcal{F}_{n-1}$ generated by $\pi_1,X_{\pi_1},\pi_2,X_{\pi_2},...,\pi_{n-1},X_{\pi_{n-1}}$. We denote $a^*$ as the best arm and $\mu^* = \mu_{a^*}$ as expected reward associated with it. Regret and expected regret for a policy $\pi$ at round $n$ is defined as:
\begin{equation}
    R^{MAB}_{\pi}(n) = \sum\limits_{k \in \mathcal{A}} (\mu^* - X_a^{\pi}(n)) \mathds{1}_{(\pi_n = k)},
\end{equation}
\begin{equation}
    \mathds{E}[R^{MAB}_{\pi}(n)] = \sum\limits_{k \in \mathcal{A}} (\mu^* - \mu_k) \mathds{E}[\pi_n = a],
\end{equation}
where $\mathds{1}(\cdot)$ is indicator function. Performance of any decision rule is measured by expected cumulative regret. Expected cumulative regret till round $T$ for policy $\pi$ is given by:
\begin{equation}
    \mathds{E}[R^{MAB}_{\pi,T}] = \sum\limits_{k \in \mathcal{A}} (\mu^* - \mu_k) \mathds{E}[N_a^{\pi}(T)].
\end{equation}
Any MAB algorithm aims to find a policy $\pi^*$ that minimises regret, formally:
\begin{equation}
    \pi_{MAB}^* = \underset{\pi \in \Pi}{\arg\min} \Big(\limsup_{T \rightarrow \infty} \frac{\mathds{E}[R^{MAB}_{\pi,T}]}{\log(T)} \Big).
\end{equation}
Bernoulli Kullback-Leibler divergence for $(p,q) \in \Theta^2$ as mentioned earlier is defined as:
\begin{equation}
    \mathcal{K}(p,q) = p\log(\tfrac{p}{q}) + (1-p)\log(\tfrac{1-p}{1-q}),
\end{equation}
with, by convention, $0\log0 = 0\log\tfrac{0}{0}=0$ and $x\log \frac{x}{0} = +\infty$ for $x > 0$.

\subsection{Dueling Bandits Problem}
We consider a $K$ armed dueling bandit problem such that $K \geq 2$ and finite. We define $\mathcal{A} = \{1,2,...,K\}$ as set of arms. Time proceeds in round indexed by $n = 1,2,...,T$. In each round $n$, an arm pair $\pi_n = (a_1(n),a_2(n)) \in \mathcal{A}^2$ is played and a noisy comparison $w_n\big((a_1(n),a_2(n))\big)$ is obtained. If arm $a_1(n)$ was preferred over arm $a_2(n)$ then $w_n = 1$ else $w_n = 0$. This comparison is characterised by a preference matrix $P = [P_{ij}]_{K \times K}$, where $P_{ij}$ is probability of arm $i$ being preferred over arm $j$. We assume comparisons are independent and remain stationary over time. Also we assume order of comparison does not affect outcome i.e. $(i,j)$ and $(j,i)$ would lead to same outcome. Thus $P_{ij} = 1 - P_{ji}$. We assign $P_{ii} = 0.5$. When we say arm $i$ beats $j$, we mean $P_{ij} > 0.5$. \\

\subsubsection{Copeland dueling bandits}
Condorcet winner may not often exist in practise. One of the straight forward way to declare a winner in such cases is the player or action which secures maximum wins. For example, a football team winning a league has not necessarily defeated all other teams but has defeated maximum number of teams. Copeland winner in dueling bandit problem is an arm which defeats maximum number of arms. Copeland score for any arm $i$ is defined as $\zeta_i = \frac{1}{K-1} \sum_{j \in \mathcal{A}} \mathds{1}_{(P_{ij} > 0.5)}$. Thus an arm with maximum Copeland score is Copeland winner. Formally, we say arm an $a^*$ is Copeland winner if $a^* = \arg\max_{i \in \mathcal{A}} \;\zeta_i$. Now, let us assume arm $a^*$ is Copeland winner, then regret at round $n$ for policy $\pi_n = (a_1(n),a_2(n))$ is defined as:
\begin{equation}
    R^{Cope}_{\pi}(n) = \dfrac{2\zeta_{a^*} - \zeta_{a_1(n)} - \zeta_{a_2(n)}}{2}.
\end{equation}
Throughout our discussion we assumed existence of unique winner. Like in any standard MAB algorithm, any decision rule plays an arm pair $(a_1(n),a_2(n))$ depending on previously played arm pair and observed rewards. Any decision policy $\pi$ such that event $\{(\pi_n = (a_1(n),a_2(n)))\}$ belongs to $\sigma$ field $\mathcal{G}_{n-1}$ generated by \\ $\pi_1,w_1(\pi_1),\pi_2,w_2(\pi_2),...,\pi_{n-1},w_{n-1}(\pi_{n-1})$.
As in MAB problem, any k-armed dueling bandit algorithm aims to find a policy $\pi^*$ which minimises the cumulative regret, formally:
\begin{equation}
    \pi_{U}^* = \underset{\pi \in \Pi}{\arg\min} \Big(\limsup_{T \rightarrow \infty} \frac{\mathds{E}[\sum_{n=1}^T R^{U}_{\pi}(n)]}{\log(T)} \Big).
\end{equation}
where $U$ can be any type of problem like Copeland, Condorcet, Borda etc. In next section we discuss our Sup-KLUCB algorithm to solve k-armed dueling bandit problem by converting it into standard MAB problem.

\section{Sup-KLUCB Algorithm}
We now introduce Sup-KLUCB algorithm which is applicable to any $k$ armed bandit problem with a single winner. We first define few notations required in our algorithm. 
\\
We define $Sup_i$ to be any score for arm $i$ which is used to define a single winner i.e.,  for Borda problems $Sup_i$ is Borda score for $i^{th}$ arm, for Condorcet and Copeland problems $Sup_i$ is Copeland score $\zeta_i$ for $i^{th}$ arm. In other words Sup(short for Superior) is a measure to rank various arms based on any fixed criteria. We require $Sup_i \in \Theta = [0,1]$, if it is in some general interval $[a,b]$, it can be normalised to $[0,1]$. We have assumed that our problem only has a single winner, i.e. assuming w.l.o.g. that arm 1 is winner, i.e. $Sup^* = Sup_1 = \max_{i \in \mathcal{A}} Sup_i$ and $Sup^* - Sup_i = 0$ only when $i = 1$.
\\
Let us define $\mathcal{B} = \{(i,j): i \leq j; i,j \in \mathcal{A}\}$ with cardinality $\mathcal{B}$ as $|\mathcal{B}| = \Bar{K} := \frac{K(K+1)}{2}$, where $|\cdot|$ is cardinality of a set. We define a bijective function $f:\mathcal{B} \rightarrow \mathcal{C}$ where $\mathcal{C} = \{1,2,...,\Bar{K}\}$. Note that we have allowed any bijective function without considering the exact order of mapping between elements in $\mathcal{B}$ and in $\mathcal{C}$.
We denote inverse of function $f$ by $f^{-1}:\mathcal{C} \rightarrow \mathcal{B}$. Now for $f((i,j)) = k$, we define  $\mu_k = Sup_i \cdot Sup_j$, $\mu_k \in \Theta$, where $a \cdot b$ represent scalar product of $a$ and $b$.
\\
Sup-KLUCB converts $K$-armed dueling bandit problem to single MAB problem with $\Bar{K}$ arms. Each arm pair in $\mathcal{B}$ is considered as a single arm with expected mean $\mu$. Note for $i \neq j$, if $(i,j) \in \mathcal{B}$ then $(j,i) \notin \mathcal{B}$. This is because including arm pair $(j,i)$ will not bring any new information as $P_{ij} = 1 - P_{ji}$. One can argue that arm pair $(i,i)$ also does not bring any new information as $P_{ii} = 0.5$ is fixed, but any decision making rule like RUCB or DTS after enough exploration plays $(1,1)$ (say arm $1$ is winner). We are now ready to explain Sup-KLUCB algorithm.
\\
We use following notations: $T$ is horizon, $n$ is round index,  $c_1,c_2$ are hyper parameters. For $f^{-1}(k) = (i,j)$, $N(k)$ is the number of times arm pair $(i,j)$ has been played, $W(k)$ is the number of times arm $i$ won over arm $j$. $w_n(k)$ is reward for played arm pair $(i,j)$ at each round $n$. We denote our selected arm in each round as $a \in \mathcal{C}$.
\\
After giving a broad picture of algorithm and defining several notations, we now show that for any $k$-armed bandit problem, any competent algorithm which declares an arm (say $a$) as winner is also a winner by Sup-KLUCB algorithm.  
\\
\begin{theorem}
\label{thm1}
Given there exists a unique winner, an arm $a^*$ is winner of Copeland $k$-armed dueling bandit problem if and only if it is also winner by Sup-KLUCB algorithm.
\end{theorem}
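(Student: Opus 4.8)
The plan is to reduce Theorem~\ref{thm1} to a single elementary fact about the reduced bandit --- that the mean map $k \mapsto \mu_k$ has a unique maximiser, located at the pair $(a^*,a^*)$ --- and then to read off both implications from the consistency (uniform goodness) of the KLUCB subroutine that Sup-KLUCB runs on the $\bar K$-armed instance. I would keep the normalisation already fixed in the text: without loss of generality arm $1$ is the unique winner, so for the Copeland problem $Sup_i = \zeta_i$, and uniqueness means $Sup_1 = \zeta_1 > \zeta_i = Sup_i$ for every $i \neq 1$. The one preliminary I need is $Sup_1 > 0$: since $\zeta_1$ is the largest Copeland score we have $\zeta_1 \ge \zeta_2 \ge 0$, and a unique winner forces $\zeta_1 > \zeta_2$, hence $\zeta_1 > 0$ (equivalently, with strict comparisons the mean number of arms beaten is $(K-1)/2$, so the maximum is at least $1/2$).

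Next I would establish the core claim: $\arg\max_{(i,j)\in\mathcal{B}} Sup_i\, Sup_j = \{(1,1)\}$, so the reduced MAB has the single optimal arm $k^* := f((1,1))$ with $\mu_{k^*} = Sup_1^2$. This is a two-line case check: for any $(i,j)\in\mathcal{B}$, $0 \le Sup_i \le Sup_1$ and $0 \le Sup_j \le Sup_1$ give $Sup_i\, Sup_j \le Sup_1\, Sup_j \le Sup_1^2$; and if $(i,j)\neq(1,1)$ then some index $\ell\in\{i,j\}$ satisfies $Sup_\ell < Sup_1$, which together with $Sup_1>0$ makes one of the two inequalities strict when the other factor is positive, and makes the product $0<Sup_1^2$ when the other factor is zero. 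Hence $\mu_k < \mu_{k^*}$ for every $k\in\mathcal{C}\setminus\{k^*\}$, so $\max_{k}\mu_k = Sup_1^2$ is attained only at $k^*$.

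Both directions then follow. If $a^*$ is the Copeland winner, then (after the relabelling above) $k^*=f((a^*,a^*))$ is the unique optimal arm of the bandit Sup-KLUCB plays; applying the non-asymptotic bound~(2) to each of the $\bar K - 1$ suboptimal arms, every $k\neq k^*$ is drawn only $\mathcal{O}(\log n)$ times, so $N(k^*)/n\to 1$ and the arm eventually committed to is $k^*$; since $f^{-1}(k^*)=(a^*,a^*)$, the algorithm declares $a^*$. Conversely, if Sup-KLUCB declares some arm $a$ --- i.e. it eventually commits to the pair $(a,a)$, equivalently to the bandit arm $f((a,a))$ --- then by the same guarantee $f((a,a))$ is optimal, so $Sup_a^2 = \mu_{f((a,a))} = \max_k\mu_k = Sup_1^2$, hence $Sup_a = Sup_1 = Sup^*$; the normalisation ``$Sup^* - Sup_i = 0$ only when $i=1$'' then gives $a=1$, which is precisely the unique Copeland winner $a^*$.

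The step I expect to be the crux is the uniqueness of the product-maximiser, and specifically the role of $Sup_{a^*}>0$: if the winner could have Copeland score $0$ the ``only if'' direction would fail, since pairs such as $(a^*,j)$ or $(i,j)$ with a zero factor would tie it. The remaining, more bookkeeping-type concern is formalising ``winner by Sup-KLUCB''; I would handle it entirely by invoking~(2) for the reduced instance rather than re-proving anything about KLUCB. A caveat I would state explicitly is that the argument lives at the level of the idealised reduction $\mu_k = Sup_i\, Sup_j$, so it implicitly relies on the empirical Copeland scores built from the counts $W(k)/N(k)$ converging to the true $\zeta_i$, which is what lets the KLUCB guarantee transfer to this setting.
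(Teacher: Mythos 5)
Your proposal is correct and follows essentially the same route as the paper's own proof: map the Copeland winner $a^*$ to the diagonal pair $(a^*,a^*)$, show it maximises $\mu_k = Sup_i\,Sup_j$ over $\mathcal{B}$, and invoke the consistency of KLUCB on the reduced $\bar K$-armed instance for both directions. The one place you go beyond the paper is worthwhile: you verify $Sup_{a^*}>0$ and uniqueness of the product-maximiser over \emph{all} pairs (including off-diagonal ones), whereas the paper only checks that $(a^*,a^*)$ attains the maximum and that other diagonal pairs do not, which is strictly needed for KLUCB to single out $f((a^*,a^*))$.
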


\begin{proof}
Winner is unique i.e. $\zeta_i = \zeta_j$ if and only if $i = j$ for all $i,j \in \mathcal{A}$. If an arm $a^*$ is the winner of $k$-armed dueling bandit problem then $a^* = \arg\max_{i \in \mathcal{A}} \;\zeta_i$ and for Copeland problem $Sup_i = \zeta_i$. Since $Sup_i \in \Theta \geq 0$ for all $i \in \mathcal{A}$, we have $Sup_{a^*} \cdot Sup_{a^*} = \max_{i,j \in \mathcal{A}} Sup_{i} \cdot Sup_{j}$. Thus we have an arm $k^* \in \mathcal{C}$ such that $f((a^*,a^*)) = k^*$ and $k^* = \arg\max_{k \in \mathcal{C}} \;\mu_k$. For any stochastic bounded MAB problem, KLUCB has been proved to declare true winner asymptotically. Thus for our problem, considering all $\bar{K}$ arms in $\mathcal{C}$ to be independent, KLUCB will declare the winner with maximum reward $\mu$ i.e. $k^*$. Thus a winner in $k$-armed dueling bandit problem is also winner in Sup-KLUCB algorithm. Now we prove the other way by contraposition argument. By our uniqueness assumption, if arm $b \in \mathcal{A}$ is not a winner, then $Sup_b \neq \max_{i \in \mathcal{A}} Sup_i$ and thus for an arm $l \in \mathcal{C}$ such that $f((b,b)) = l$, $\mu_l \neq \max_{k \in \mathcal{C}} \mu_k$. Hence $l$ will not be declared as winner by KLUCB being sub-optimal. This concludes our proof.
\end{proof}

We define a function $h:\mathcal{A}\rightarrow [\alpha,\beta]$ where $\alpha \geq 0; \beta > 0; \alpha < \beta; \alpha, \beta \in \mathds{R}$ that defines a unique winner $a^*$ such that $a^* = \arg\max_{i \in \mathcal{A}} h(i)$ (like Copeland winner, Borda winner etc.) or $a^* = \arg\min_{i \in \mathcal{A}} h(i)$ (when criteria is arm with minimum number of losses).

\begin{lemma}
Given any function (winner criteria) $h$ as defined above, an arm $a^*$ is winner of $k$-armed dueling bandit problem if and only if it is also winner by Sup-KLUCB algorithm.
\end{lemma}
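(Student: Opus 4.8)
The plan is to mirror the proof of Theorem~\ref{thm1} almost verbatim, the only new work being to exhibit a normalisation of $h$ that turns the two winner criteria ($\arg\max$ and $\arg\min$) into the setup already handled there, and then to invoke the asymptotic correctness of KLUCB on the resulting $\bar{K}$-armed stochastic MAB instance.

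First I would split on the definition of $h$. If $a^* = \arg\max_{i\in\mathcal{A}} h(i)$, set $Sup_i := h(i)/\beta$; if $a^* = \arg\min_{i\in\mathcal{A}} h(i)$, set $Sup_i := 1 - h(i)/\beta$ (this is the normalisation alluded to in the paragraph defining $Sup$). Since $h(i)\in[\alpha,\beta]$ with $0\le\alpha<\beta$ and $\beta>0$, in either case $Sup_i\in\Theta=[0,1]$, and because the chosen map is strictly monotone in $h(i)$, the uniqueness of the dueling-bandit winner forces $a^*$ to be the unique maximiser of $Sup_i$ over $\mathcal{A}$. I would also record the strict inequality $Sup_{a^*}>0$: if $Sup_{a^*}$ were $0$ then $0\le Sup_i\le Sup_{a^*}=0$ for every $i$, so all $Sup_i$ (hence all $h(i)$) would coincide, contradicting uniqueness.

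Next, with $\mu_k := Sup_i\cdot Sup_j$ for $f((i,j))=k$ as in the algorithm, I would argue that $k^* := f((a^*,a^*))$ is the unique maximiser of $\mu_k$ over $\mathcal{C}$. Given any $(i,j)\in\mathcal{B}$ with $(i,j)\neq(a^*,a^*)$, at least one coordinate, say $i$, satisfies $Sup_i<Sup_{a^*}$; if $Sup_j=0$ the product is $0<Sup_{a^*}^2$, and otherwise $Sup_i\,Sup_j < Sup_{a^*}\,Sup_j \le Sup_{a^*}^2 = \mu_{k^*}$, using $Sup_{a^*}>0$ and $Sup_j\le Sup_{a^*}$. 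Hence the $\bar{K}$-armed bounded stochastic MAB instance passed to KLUCB has a unique optimal arm $k^*$, so by the asymptotic correctness of KLUCB it declares $k^*$, i.e.\ Sup-KLUCB declares $a^*$. The converse is the contraposition argument of Theorem~\ref{thm1}: if $b\in\mathcal{A}$ is not the dueling-bandit winner, then $Sup_b$ is not maximal, so $\mu_{f((b,b))}<\mu_{k^*}$, the arm $f((b,b))$ is sub-optimal, and KLUCB never declares it.

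The only genuine obstacle is the bookkeeping in the second step: picking a single normalisation recipe that lands in $\Theta$, is monotone in the correct direction for both the $\arg\max$ and the $\arg\min$ variants, and keeps $Sup_{a^*}$ \emph{strictly} positive, so that $(a^*,a^*)$ is the unique maximiser of the product $\mu_k$ rather than merely one of several pairs attaining the value $0$. Once that is in place, everything reduces to Theorem~\ref{thm1} and the KLUCB guarantee.
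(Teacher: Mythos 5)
Your proof takes essentially the same route as the paper: both reduce the lemma to Theorem~\ref{thm1} by composing $h$ with a monotone affine normalisation into $[0,1]$ (the paper uses $Sup_i=\tfrac{h(i)-\alpha}{\beta-\alpha}$ for the $\arg\max$ case and the reflected version for the $\arg\min$ case; your $h(i)/\beta$ and $1-h(i)/\beta$ work equally well). Your additional verification that $Sup_{a^*}>0$ and that $(a^*,a^*)$ is the \emph{unique} maximiser of the product $\mu_k$ tightens a step the paper leaves implicit, but it does not change the overall argument.
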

\begin{proof}
If function $h$ is such that winner acquires maximum value i.e., $a^* = \arg\max_{i \in \mathcal{A}} h(i)$, then for any arm $i$ we can define $Sup_i = \frac{h(i) - \alpha}{\beta - \alpha}$. Now we have $Sup_{a^*} = \max_{i \in \mathcal{A}} Sup_{i}$. Hence proved from Theorem 1. If function $h$ is such that winner acquires minimum value i.e., $a^* = \arg\min_{i \in \mathcal{A}} h(i)$, then for any arm $i$ we can define $Sup_i = \frac{\beta - h(i)}{\alpha - \beta}$. Again we have $Sup_{a^*} = \max_{i \in \mathcal{A}} Sup_{i}$. Hence proved from Theorem 1.
\end{proof}

We now explain Sup-KLUCB algorithm. In Algorithm 1, for Copeland problems, we experimentally found $c_1 = \tfrac{2}{K}$ and $c_2 = \tfrac{3}{K} + \tfrac{40}{(K-2)^2}$. Now we explain each step of algorithm 1. Steps 1-4 are run once for each arm. In step 6 and 7, we calculate $Sup_i$ for all $i \in \mathcal{B}$ and $\mu_k$ for all $k \in \mathcal{C}$ respectively. In step 8, we select arm $a \in \mathcal{C}$ with highest upper confidence calculated using KL divergence $\mathcal{K}$. Arm pair $a$ is played and we declare winner at round $T$ with the arm having maximum $Sup$ value.

\begin{algorithm}[!ht]
\caption{Sup-KLUCB Algorithm}\label{Copeland}
\begin{algorithmic}[1]
\Require $T$, $c_1$, $c_2$

\For{n = 1 to $\bar{K}$}
    \State $N[n] \gets 1$
    \State $W[n] \gets w(n)$  
\EndFor

\For{n = $\bar{K}$ + 1 to T}
    \State Calculate $Sup_i$ for all $i \in \mathcal{B}$
    \State Calculate $\mu_k$ for all $k \in \mathcal{C}$
    \State $a \gets \arg\max_{a \in \mathcal{C}} \max \Big\{q \in \Theta : \, N[a] \mathcal{K}\big(\mu_a,q\big) \leq c_1 \ln(n-\Bar{K}) + c_2 \ln(\ln(n - \Bar{K}) + 1)  \Big\}$
    \State $N[a] \gets N[a] + 1$
    \State $W[a] \gets w_n(a)$
\EndFor \Comment{Winner has maximum $Sup_i$}
\end{algorithmic}
\end{algorithm}


\section{Experiments}
We performed Monte Carlo simulations to prove the performance of our algorithm. For Copeland problem, we have compared our algorithm with state of art DTS algorithm and RUCB algorithm because it is an UCB based algorithm. For our Monte Carlo experiment, we randomly chose number of arms from 3 to 36. We played 25 games with 25 iteration each with each game played upto 100,000 time steps. Preference or Probability matrix was generated randomly. We only have one assumption that winner must be unique. In figure below, we show average cumulative regret of different algorithms. We only show $50\%$ confidence interval as higher number would engulf almost whole graph.

 \begin{figure}[!ht]%
     \centering
     \subfloat[Cumulative regret of RUCB, DTS and Sup-KLUCB for Copeland bandit problem, number of arms varies from 3 to 36]{{\includegraphics[width=0.47\linewidth, height= 0.3525\linewidth]{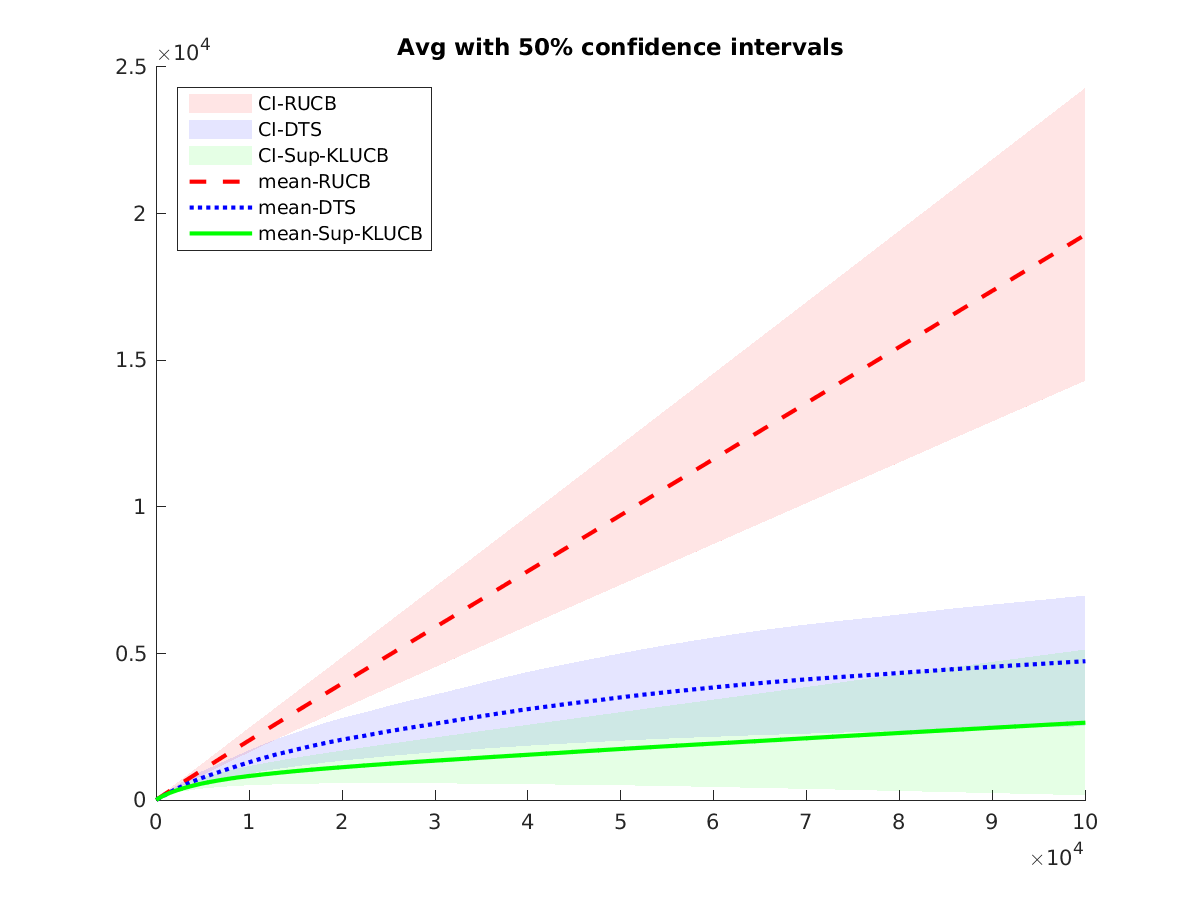}}}%
     \qquad
     \subfloat[Average regret at time step 100,000 of RUCB, DTS and Sup-KLUCB for different number of arms]{{\includegraphics[width=0.47\linewidth, height= 0.3525\linewidth]{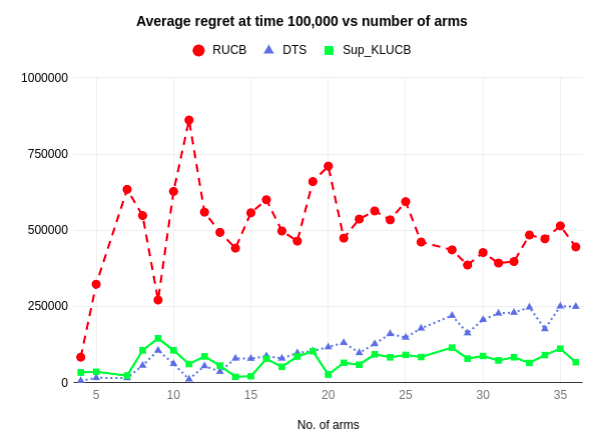} }}%
 \end{figure}





In figure (a) and (b) dashed line is for RUCB, dotted line is for current state of the art DTS and solid line is for our algorithm Sup-KLUCB. From the figure (a), we can infer that Sup-KLUCB outperform RUCB and DTS algorithm. \\
Further, we analysed performance with respect to number of arms. For this we simulated for 100 different games, each iterated 25 times up to 100,000 time steps. Figure (b)
shows that performance of Sup-KLUCB with respect to DTS increases as we increase number of arms and for smaller number of arms Sup-KLUCB performs at par with DTS.
\section{Conclusion}
In this paper we proposed a general framework for conversion of dueling bandit problem to MAB problem. This framework has very wide scope of application in terms of type of problem,and with minor changes in objective function it can used for variety of problems like Copeland, Condorcet, and Borda. Using KLUCB, we further extended UCB to dueling bandits and outperformed current state of the art algorithm for Copeland bandit problem. \\
Our future works includes detailed mathematical analysis of regret and upper and lower bounds and using this analysis, we need to firmly establish hyper-parameters used in our algorithms. 

\newpage

\end{document}